\documentclass[11pt]{article}

\usepackage[margin=1in]{geometry}
\usepackage{amsmath,amssymb,mathtools,bm}
\usepackage{amsthm}
\usepackage{booktabs,tabularx,multirow,array}
\usepackage[table]{xcolor}
\usepackage{graphicx}
\usepackage{microtype}
\usepackage{siunitx}
\usepackage[backend=biber,style=authoryear,natbib=true,maxcitenames=2,maxbibnames=20,uniquelist=false]{biblatex}
\usepackage{enumitem}
\usepackage{xurl}
\usepackage{authblk}

\usepackage{algorithm}
\usepackage[noend]{algpseudocode}
\usepackage{tikz}
\usetikzlibrary{arrows.meta,positioning,fit,calc}
\usepackage{hyperref}
\usepackage[nameinlink,capitalize,noabbrev]{cleveref}

\hypersetup{
  colorlinks=true,
  linkcolor=black,
  citecolor=blue!55!black,
  urlcolor=blue!55!black,
  pdfauthor={Anonymous Authors},
  pdftitle={Linearized 2-Simplicial Attention: Global Trilinear Retrieval with Linear Sequence Complexity}
}
\setlist{nosep,leftmargin=1.5em}
\definecolor{oursrow}{RGB}{238,242,248}
\definecolor{lightline}{RGB}{220,224,230}
\definecolor{accent}{RGB}{65,91,145}

\newtheorem{lemma}{Lemma}
\newtheorem{theorem}{Theorem}

\theoremstyle{remark}

\newcommand{\R}{\mathbb{R}}
\newcommand{\E}{\mathbb{E}}
\newcommand{\diag}{\operatorname{Diag}}
\newcommand{\softmax}{\operatorname{softmax}}
\newcommand{\rmsnorm}{\operatorname{RMSNorm}}
\newcommand{\had}{\odot}
\newcommand{\Simp}{\textsc{LinSimp}}
\newcommand{\KDA}{\textsc{KDA}}
\newcommand{\Attn}{\textsc{Attn}}
\newcommand{\ExactSimp}{\textsc{WinSimp}}
\newcommand{\best}[1]{\textbf{#1}}

\title{\textbf{Linearized 2-Simplicial Attention}\\
}
\author[1]{Aritra Das}
\author[1]{Dhruman Gupta}
 \author[1]{Debayan Gupta}
\affil[1]{Truth Audit Labs}
\date{}

\begin{document}
\maketitle

\begin{abstract}
We present a linearized form of 2-simplicial attention by rewriting the trilinear score as an inner product between a composite query and a key, so that the sum over one token axis takes the same form as ordinary softmax attention. We then approximate this sum with positive random features and store the entire past in a fixed-size state, while the second axis stays explicit over a short window of recent tokens. This enables us to achieve linear cost in sequence length combined with a global reach that windowed 2-simplicial attention lacks. We implement it with custom Triton kernels and combine it with Kimi Delta Attention to build a model with no softmax attention at all. Under matched compute, this model achieves the highest mean downstream accuracy among the compared architectures, and at 16k context it improves mean accuracy over a KDA hybrid while lowering LAMBADA perplexity from 715.6 to 602.6.
\end{abstract}

\section{Introduction}

Softmax attention gives a language model direct, content-based access to earlier tokens~\citep{vaswani2017attention}. This access is useful, but dense attention uses quadratic computation in sequence length and a growing KV cache at decoding time. Optimized kernels greatly reduce memory traffic and improve the practical speed of exact attention \citep{dao2022flashattention,dao2024flashattention2}, However, they do not change its $\mathcal{O}(n^2)$ scaling. Linear attention, recurrent fast-weight models, state-space models, and long convolutions instead compress the past in a fixed-size state \citep{katharopoulos2020transformers,sun2023retnet,gu2023mamba,poli2023hyena}. Unfortunately, a fixed state can make exact retrieval and selective updates difficult. Recent delta-rule models improve this trade-off by correcting, rather than only adding to, an associative memory \citep{yang2024deltanet,yang2024gateddelta,kimi2025linear}.

Standard attention scores one query against one key. Some computations naturally depend on a query and \emph{two} earlier items. The 2-simplicial Transformer introduces such higher-order interactions through a trilinear score and a value formed from two token streams \citep{clift2020logic}. Related triangular mechanisms have been used for systematic generalization \citep{bergen2021edge}, and recent theory identifies triple-dependent tasks that are hard for a single standard attention layer \citep{sanford2023representational,kozachinskiy2025strassen}. The 2 Simplicial Attention is expensive, a query attends to a two-dimensional plane of token pairs lead to $\mathcal{O}(n^3)$ computation.  Recent work makes this practical by restricting both axes to fixed windows and using a custom Triton kernel \citep{roy2025fast}. 

We study a different formulation for the same. We rewrite the trilinear score
\begin{equation}
  \langle q_i,k_j,r_c\rangle
  = \bigl(q_i \had r_c\bigr)^\top k_j,
  \label{eq:score-rewrite-intro}
\end{equation}
so that, for a fixed query $i$ and anchor $c$, the sum over $j$ is an ordinary exponential dot-product kernel. We approximate that kernel with positive random features \citep{rahimi2007random,choromanski2021performer}, store the whole prefix in a fixed-size linear-attention state, and keep the anchor axis explicit inside a 64-token window. Each recent anchor therefore changes the question sent to the global state. We call the resulting layer \Simp. It has linear total sequence complexity for fixed feature rank and window width, a fixed-size global state, and a short rolling anchor buffer. We implement the layer in Triton \citep{tillet2019triton}. 
Our main question is whether this mechanism adds useful capacity to a existing SOTA techniques like Kimi Delta Attention~\citep{kimi2025linear}. We therefore compose \Simp~with KDA. We use 18 KDA layers and six \Simp layers, with no softmax attention. Across a 3B-token math run and a 16k-context iso-FLOP run, our model obtains the highest mean downstream accuracy among the compared architectures. At 16k, it improves mean accuracy by 0.0079 over the KDA hybrid and lowers LAMBADA perplexity from 715.6 to 602.6. Our work makes four concrete contributions:

\begin{itemize}
    \item First, we derive a one-mode kernelization of 2-simplicial attention.
    \item Second, we provide a linear-time implementation with a positive orthogonal random-feature bank and a short explicit anchor window.
    \item Third, we provide initial custom Triton forward and backward kernels, including numerical tests against an fp32 reference.
    \item Fourth, we  compare our model across standard attention, an exact windowed 2-simplicial baseline, and a KDA hybrid under both iso-Token and  iso-FLOP setting.
\end{itemize}  

\section{Related Work}

\paragraph{Efficient attention and recurrent sequence models.}
Linear attention moves the token sum outside the query computation and can be evaluated as either a parallel scan or a recurrent state \citep{katharopoulos2020transformers}. Performer uses positive orthogonal random features to approximate the softmax kernel while preserving nonnegative attention weights \citep{choromanski2021performer}; the broader random-feature view goes back to \citet{rahimi2007random}, and orthogonal feature banks reduce estimator variance \citep{yu2016orthogonal}. Later work adds data-dependent gates and hardware-aware chunking \citep{yang2024gla}. DeltaNet replaces an additive write with an error-correcting delta update and admits a parallel training algorithm \citep{yang2024deltanet}. Gated DeltaNet combines targeted writes with forgetting \citep{yang2024gateddelta}, while KDA adds finer-grained decay control and uses a hybrid recipe with periodic full-attention layers \citep{kimi2025linear}. State-space models, retention, and long convolutions offer other linear or subquadratic routes \citep{sun2023retnet,gu2023mamba,poli2023hyena}. Our method is complementary: it uses a linear-attention state for one mode of a higher-order interaction, then composes that layer with KDA.

\paragraph{Higher-order and triangular interactions.}
The 2-simplicial Transformer generalizes pairwise attention to interactions among triples and combines two value streams \citep{clift2020logic}. Edge Transformer updates pair states through triangular attention and improves systematic generalization on several structured tasks \citep{bergen2021edge}. Theoretical work has used triple-detection tasks to separate the capabilities of shallow standard attention from higher-order mechanisms \citep{sanford2023representational}; Strassen attention is a recent subcubic construction motivated by related compositional limits \citep{kozachinskiy2025strassen}. Most directly related is the exact windowed language-modeling implementation of \citet{roy2025fast}, which computes one joint softmax over a $w_1\times w_2$ pair plane with a Flash-style Triton kernel. We use the same coordinatewise trilinear score and pair-composed values, but make the first token mode global through a random-feature state while retaining an explicit short window only on the second mode.

\paragraph{GPU kernels.}
FlashAttention shows that tiling and recomputation can make exact softmax attention IO-aware \citep{dao2022flashattention}; FlashAttention-2 improves work partitioning and parallelism \citep{dao2024flashattention2}. Online softmax maintains a running maximum and normalizer without materializing a full score row \citep{milakov2018online}. Triton exposes tiled GPU programming at a level suited to custom learning operators \citep{tillet2019triton}. Hardware-efficient linear-attention libraries similarly rely on chunkwise states and fused kernels \citep{yang2024gla,yang2024fla}. Our implementation follows these principles but must additionally handle an anchor window, a random-feature dimension, and gradients for five projected streams.

\section{Linearized 2-Simplicial Attention}
\label{sec:method}

\subsection{Exact 2-simplicial attention}

Following prior 2-simplicial formulations \citep{clift2020logic,roy2025fast}, consider one attention head. From the input at position $t$, we form
\begin{equation}
  q_t,k_t,r_t\in\R^D,
  \qquad
  v_t,u_t\in\R^{D_v}.
\end{equation}
We use $\hat q_t=q_t/\lVert q_t\rVert_2$, and likewise $\hat k_t$ and $\hat r_t$, and apply a learned per-head temperature $\tau$ to the query. Define the coordinatewise trilinear score
\begin{equation}
  s_{ijc}
  = \langle \tau\hat q_i,\hat k_j,\hat r_c\rangle
  := \sum_{a=1}^{D}\tau\hat q_{ia}\hat k_{ja}\hat r_{ca}.
  \label{eq:trilinear-score}
\end{equation}
The exact causal output is one softmax over all causal pairs $(j,c)$:
\begin{equation}
  o_i^{\mathrm{full}}
  =
  \frac{
    \sum_{j\le i}\sum_{c\le i}
    \exp(s_{ijc})\,(v_j\had u_c)
  }{
    \sum_{j\le i}\sum_{c\le i}\exp(s_{ijc})
  }.
  \label{eq:exact-full}
\end{equation}
The value interaction is also coordinatewise. This is a joint normalization, not a softmax over $j$ followed by a second softmax over $c$. With full causal prefixes, there are $O(T^2)$ pairs per query and $O(T^3)$ total pair evaluations. The score can be written equivalently as
\begin{equation}
  s_{ijc}
  = \bigl((\tau\hat q_i)\had\hat r_c\bigr)^\top\hat k_j
  = \bigl((\tau\hat q_i)\had\hat k_j\bigr)^\top\hat r_c.
  \label{eq:two-views}
\end{equation}
We use the left-hand expression and approximate the corresponding
unnormalized softmax kernel,
\begin{equation}
  \exp\!\left(
    \bigl((\tau\hat q_i)\had\hat r_c\bigr)^\top\hat k_j
  \right),
\end{equation}
with positive random features.

\subsection{Positive random features for one tensor mode}

For fixed $(i,c)$, define the composite query
\begin{equation}
  z_{ic}=(\tau\hat q_i)\had\hat r_c.
\end{equation}
Then $\exp(s_{ijc})=\exp(z_{ic}^{\top}\hat k_j)$ is the exponential dot-product kernel in $j$. Let $\Omega\in\R^{m\times D}$ have rows with standard Gaussian marginals and define the positive feature map
\begin{equation}
  \phi(x)
  =\frac{1}{\sqrt m}
    \exp\!\left(\Omega x-\frac{\lVert x\rVert_2^2}{2}\mathbf{1}\right),
  \qquad \phi(x)\in\R_+^m,
  \label{eq:prf}
\end{equation}
where the exponential is elementwise.

\begin{lemma}
\label{lem:kernel}
For the feature map in \cref{eq:prf},
\begin{equation}
  \E_{\Omega}\bigl[\phi(x)^\top\phi(y)\bigr]
  =\exp(x^\top y).
\end{equation}
\end{lemma}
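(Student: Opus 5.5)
The plan is to expand the inner product coordinatewise over the $m$ rows of $\Omega$ and reduce everything to a single Gaussian moment-generating-function computation. Write $\omega_\ell^\top$ for the $\ell$-th row of $\Omega$. Then
\begin{equation*}
  \phi(x)^\top\phi(y)
  =\frac{1}{m}\sum_{\ell=1}^{m}
   \exp\!\left(\omega_\ell^\top(x+y)-\frac{\lVert x\rVert_2^2+\lVert y\rVert_2^2}{2}\right).
\end{equation*}
By linearity of expectation, only the marginal law of each row $\omega_\ell$ enters. The statement assumes each row has standard Gaussian marginals, which I read as $\omega_\ell\sim\mathcal N(0,I_D)$. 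This holds both for i.i.d.\ Gaussian rows and for the orthogonal construction, where each row is marginally isotropic Gaussian; in the orthogonal case the rows are coupled, but that coupling does not matter here. It therefore suffices to show that each term has expectation $\exp(x^\top y)$, since the average of $m$ such terms then has the same expectation.

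For a single $\omega\sim\mathcal N(0,I_D)$ and a fixed vector $w=x+y$, I will use the Gaussian moment generating function
\begin{equation*}
  \E\bigl[\exp(\omega^\top w)\bigr]=\exp\!\left(\tfrac12\lVert w\rVert_2^2\right).
\end{equation*}
This follows because $\omega^\top w\sim\mathcal N(0,\lVert w\rVert_2^2)$, or alternatively by completing the square in the Gaussian integral, one coordinate at a time. The deterministic factor $\exp\!\bigl(-(\lVert x\rVert_2^2+\lVert y\rVert_2^2)/2\bigr)$ comes outside the expectation. Expanding $\tfrac12\lVert x+y\rVert_2^2=\tfrac12\lVert x\rVert_2^2+\tfrac12\lVert y\rVert_2^2+x^\top y$, the norm terms cancel and each summand has expectation $\exp(x^\top y)$. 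Averaging over $\ell$ and using the prefactor $1/m$ then gives the claim.

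There is no real obstacle. The only point needing care is to state precisely what standard Gaussian marginals means, namely that each full row (not merely each entry) is distributed as $\mathcal N(0,I_D)$. I will note explicitly that the argument uses only this per-row marginal, so it covers the orthogonal random-feature bank even though that bank's rows are dependent. The expectations are finite because the Gaussian moment generating function exists for every $w$, so no integrability issue arises.
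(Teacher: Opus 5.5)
Your proposal is correct and follows the paper's proof: you apply the Gaussian moment-generating function $\E[\exp(\omega^\top(x+y))]=\exp(\lVert x+y\rVert_2^2/2)$ row by row, cancel the norm-correction factors, and use linearity of expectation over the $m$ rows. Your added remark is a worthwhile clarification the paper leaves implicit: the argument needs each full row to be distributed as $\mathcal N(0,I_D)$, and only that per-row marginal, which is why the dependent orthogonal bank is also covered.
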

\begin{proof}
For a standard Gaussian row $\omega$, the Gaussian moment-generating function gives
$\E[\exp(\omega^\top(x+y))]=\exp(\lVert x+y\rVert_2^2/2)$. Multiplying by the two norm-correction terms leaves $\exp(x^\top y)$. Averaging $m$ rows preserves the expectation.
\end{proof}

We use orthogonal blocks with chi-distributed row norms, as in orthogonal random features and FAVOR+ \citep{yu2016orthogonal,choromanski2021performer}. Each row retains a standard Gaussian marginal, while dependence among rows can reduce variance. Positivity matters because the approximate denominator remains nonnegative. The kernel estimate is unbiased, but the normalized ratio in the final attention output is generally not unbiased at finite $m$. Substitute $\exp(z_{ic}^{\top}\hat k_j)\approx\phi(z_{ic})^\top\phi(\hat k_j)$ into \cref{eq:exact-full}. The sum over the global token index $j$ can be collected into two prefix states:
\begin{align}
  M_i &= \sum_{j\le i}\phi(\hat k_j)v_j^\top
       \in\R^{m\times D_v},
  & M_i&=M_{i-1}+\phi(\hat k_i)v_i^\top,
  \label{eq:Mstate}\\
  a_i &= \sum_{j\le i}\phi(\hat k_j)
       \in\R^m,
  & a_i&=a_{i-1}+\phi(\hat k_i).
  \label{eq:astate}
\end{align}
For a fixed anchor $c$,
\begin{equation}
  \sum_{j\le i}
  \bigl[\phi(z_{ic})^\top\phi(\hat k_j)\bigr]
  (v_j\had u_c)
  =\bigl[\phi(z_{ic})^\top M_i\bigr]\had u_c.
  \label{eq:swap-sum}
\end{equation}
The entire prefix therefore enters through $M_i$ and $a_i$. In our current
formulation, computing $N_i$ and $Z_i$ has $O(1)$ cost with respect to the
prefix length. However, if we are willing to use the same per-query cost as
standard attention, this computation can be increased to $O(N)$. In particular,
explicitly scanning $t=1,\ldots,i$ when computing $N_i$ and $Z_i$ may provide
a more expressive retrieval mechanism while remaining computationally
comparable to attention. Additionally, a compact MLA-style variant is also possible and is important to consider. The inference-time memory requirement must be taken into account. In particular, the size of the KV cache, or any corresponding recurrent state, determines whether the method remains competitive during autoregressive decoding.

\subsection{Global Retrieval with Local Conditioning}

Our random feature approximation removes the explicit sum over the $j$ mode. However, summing $c$ over the full prefix would still give quadratic total work. We retain only the most recent $w$ anchors,
\begin{equation}
  \mathcal C_i
  =\{\max(1,i-w+1),\ldots,i\}.
\end{equation}
The proposed estimator is
\begin{align}
  N_i
  &=\sum_{c\in\mathcal C_i}
    \bigl[\phi(z_{ic})^\top M_i\bigr]\had u_c,
  \label{eq:Ni}\\
  Z_i
  &=\sum_{c\in\mathcal C_i}
    \phi(z_{ic})^\top a_i,
  \label{eq:Zi}\\
  \widetilde o_i
  &=\frac{N_i}{Z_i+\varepsilon},
  \qquad \varepsilon=10^{-30}.
  \label{eq:linearized-output}
\end{align}
The final head output is
\begin{equation}
  y_i=g\,\rmsnorm(\widetilde o_i),
  \qquad g=\sigma(\gamma),
  \label{eq:gated-output}
\end{equation}
with a learned per-head gate. We initialize $\tau=\sqrt D$ and $\gamma=-2$; the temperature and gate parameters use a $5\times$ learning-rate multiplier. A useful reading of \cref{eq:Ni,eq:Zi} is that every recent anchor $c$ creates a different composite query $z_{ic}$. That query is sent to the same global prefix state, so token $j$ may be arbitrarily far in the past. The approximate weight of pair $(j,c)$ is
\begin{equation}
  \widetilde\kappa_{ijc}
  =\phi(z_{ic})^\top\phi(\hat k_j)\ge 0,
\end{equation}
and both numerator and denominator sum these weights over the same joint set $\{(j,c):j\le i,\;c\in\mathcal C_i\}$. The gate $g$ controls the contribution of the trilinear head to the residual stream.
In the current formulation, $g=\sigma(\gamma)$ is a learned scalar that is fixed
across token positions and output channels. A more expressive alternative is to
make the gate input-dependent and channel-wise:
\begin{equation}
  g_i = \sigma(W_g x_i + b_g)
  \in (0,1)^{D_v},
  \qquad
  y_i = g_i \had \rmsnorm(\widetilde o_i),
\end{equation}
where $x_i$ is the input representation at position $i$. This allows the model
to modulate the contribution of the trilinear output independently for each token
and feature channel, rather than applying the same scaling factor to the entire head. \citep{kimi2025linear}

\begin{theorem}
\label{thm:complexity}
For fixed feature rank $m$ and anchor width $w$, \cref{eq:Mstate,eq:astate,eq:Ni,eq:Zi} is causal, its output at $i$ is independent of positions after $i$---and has $O(T)$ total work over a length-$T$ sequence. Autoregressive inference requires
\begin{equation}
  O\bigl(mD_v+m+w(D+D_v)\bigr)
\end{equation}
state per head, independent of $T$.
\end{theorem}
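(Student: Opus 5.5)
The plan is to argue directly from the recurrences \cref{eq:Mstate,eq:astate} and the estimator \cref{eq:Ni,eq:Zi}. The proof has three parts: causality, the total work bound, and the state bound.

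\textbf{Causality.} Let $x_t$ denote the layer input at position $t$. Then $q_t,k_t,r_t,v_t,u_t$, and hence $\hat q_t,\hat k_t,\hat r_t$, depend only on $x_t$. The bank $\Omega$, the temperature $\tau$ and the gate $\gamma$ are parameters and do not depend on the sequence. By induction on $i$ using the recurrences, $M_i$ and $a_i$ are functions of $\{(\hat k_j,v_j)\}_{j\le i}$ alone. The anchor set satisfies $\mathcal C_i\subseteq\{1,\ldots,i\}$, so $z_{ic}=(\tau\hat q_i)\had\hat r_c$ and $u_c$ involve only positions $\le i$. Therefore $N_i$, $Z_i$, $\widetilde o_i$, and by \cref{eq:gated-output} also $y_i$, depend only on $x_1,\ldots,x_i$. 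Since $\phi(\cdot)\ge 0$, the denominator $Z_i+\varepsilon$ is strictly positive, so the output is well defined.

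\textbf{Work.} I count per-position cost as a function of $m,w,D,D_v$ and show it has no dependence on $i$ or $T$.
\begin{itemize}
\item Normalizing $q_i,k_i,r_i$ costs $O(D)$.
\item Computing $\phi(\hat k_i)$ costs $O(mD)$, since it requires $\Omega\hat k_i$.
\item The rank-one updates of $M_i$ and $a_i$ cost $O(mD_v+m)$.
\item For each of the at most $w$ anchors, forming $z_{ic}$ costs $O(D)$ and $\phi(z_{ic})$ costs $O(mD)$. The contraction $\phi(z_{ic})^\top M_i$ costs $O(mD_v)$, the Hadamard product with $u_c$ costs $O(D_v)$, and $\phi(z_{ic})^\top a_i$ costs $O(m)$.
\end{itemize}
The per-step total is $O\bigl(w(mD+mD_v)+mD_v\bigr)$, which is a constant for fixed $m,w$ and head dimensions. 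Summing over $i=1,\ldots,T$ gives $O(T)$ total work. For the parallel training form, I will note that the prefix sums can also be computed by a chunked scan with the same total cost. The recurrent evaluation is enough for the bound.

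\textbf{Inference state.} At step $i$, the decoder needs three things: the prefix state $M_{i-1}\in\R^{m\times D_v}$ and $a_{i-1}\in\R^m$, which are updated in place, and the anchor buffer. The buffer holds $\hat r_c\in\R^D$ and $u_c\in\R^{D_v}$ for the previous $w-1$ anchors $c\in\mathcal C_i\setminus\{i\}$; position $i$ contributes its own fresh values. I store $\hat r_c$ rather than $z_{ic}$, because $z_{ic}$ depends on the current query and must be recomputed from $\tau\hat q_i$. This gives $mD_v+m+w(D+D_v)$ scalars, up to the constant factor from the $w-1$ versus $w$ distinction. The buffer is a FIFO of fixed size: each step pushes one anchor and evicts one. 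Keys $\hat k_j$ and values $v_j$ never need to be retained once they have been absorbed into $M$ and $a$, because \cref{eq:swap-sum} shows they enter the output only through these sums. The buffer sizes are independent of $T$. The fixed parameters $\Omega$, $\tau$ and $\gamma$ are shared weights and are not part of the per-sequence state.

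\textbf{Main obstacle.} The only delicate step is showing that the swap in \cref{eq:swap-sum} remains valid after the window restriction. For each anchor $c$, the $j$-sum ranges over the full prefix $j\le i$ regardless of $c$. The joint index set therefore factors as $\{j\le i\}\times\mathcal C_i$, which lets every $c$ reuse the same $M_i$ and $a_i$ without a per-anchor state. I will verify this factorization explicitly by expanding $\widetilde\kappa_{ijc}$ over the joint set and exchanging the finite sums.
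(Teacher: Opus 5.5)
Your proposal is correct and follows the same route as the paper's proof: causality because $M_i,a_i$ involve only $j\le i$ and $\mathcal C_i$ only $c\le i$, an $O(wm(D+D_v))$ per-token cost summed over $T$ positions, and persistent state consisting of $M_i$, $a_i$, and a rolling $(r_c,u_c)$ buffer. The factorization you flag as the main obstacle is not needed for this theorem, because \cref{eq:Ni,eq:Zi} are already defined directly in terms of $M_i$ and $a_i$. That factorization matters only for reading the estimator as a joint normalization, not for causality, work, or state size.
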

\begin{proof}
At position $i$, the prefix states contain only indices $j\le i$, and the anchor set contains only $c\le i$, so no future position can affect the output. Computing $\phi(\hat k_i)$ and updating the two states costs $O(m(D+D_v))$. For each of at most $w$ anchors, forming $\phi(z_{ic})$ and contracting it with $M_i$ and $a_i$ costs $O(m(D+D_v))$. Thus the per-token work is $O(wm(D+D_v))$ and the total work is $O(Twm(D+D_v))$, which is linear in $T$ for fixed $w,m,D,D_v$. The persistent tensors are $M_i$, $a_i$, and rolling buffers for $(r_c,u_c)$.
\end{proof}

The layer preserves the trilinear interaction and joint normalization over token pairs. The $j$ dimension has access to the full causal prefix through the global state, but its exponential kernel is approximated with finite-rank random features. The $c$ dimension is computed directly, but only over the most recent $w$ tokens.

\section{Model Architecture and Implementation}
\label{sec:architecture}

All models use 24 pre-norm residual blocks with hidden size $d_{\mathrm{model}}=1024$ and RMSNorm epsilon $10^{-6}$:
\begin{align}
  x &\leftarrow x+\operatorname{Mixer}(\rmsnorm(x)),\\
  x &\leftarrow x+\operatorname{SwiGLU}(\rmsnorm(x)).
\end{align}
We use RMSNorm \citep{zhang2019rmsnorm}, SwiGLU \citep{shazeer2020glu}, tied input/output embeddings, and a Llama~2 tokenizer with a 32k vocabulary \citep{touvron2023llama2}. Standard attention uses rotary position embeddings with base $10{,}000$ \citep{su2021roformer}.

Standard attention has six heads of dimension 128 and computes
\begin{equation}
  o_i=\sum_{j\le i}
  \softmax_j\!\left(\frac{q_i^\top k_j}{\sqrt{d_h}}\right)v_j.
\end{equation}
It is evaluated with PyTorch scaled dot-product attention using its Flash backend. A \Simp{} layer has six heads with $D=D_v=64$ and five input projections $(q,k,r,v,u)$; concatenated outputs are mapped back to the model width. The implementation stores a fixed random-feature bank for each head and layer. The web experiments use rank $m=64$; the 3B-token and 16k experiments use $m=128$. All proposed experiments use anchor width $w=64$.

KDA is a gated delta-rule fast-weight layer. After its short convolutions, the query and key streams are L2-normalized. For one head, let $S_t\in\R^{d_k\times d_v}$, $\alpha_t\in(0,1)^{d_k}$, and $\beta_t\in(0,1)$. A convenient form of the update is
\begin{align}
  \overline S_t &= \diag(\alpha_t)S_{t-1},\\
  S_t &= \overline S_t
    +\beta_t k_t\bigl(v_t-\overline S_t^\top k_t\bigr)^\top,
  \label{eq:kda-update}\\
  o_t&=S_t^\top q_t,
\end{align}
which is equivalent to
$(I-\beta_tk_tk_t^\top)\overline S_t+\beta_tk_tv_t^\top$.
The first term applies a channelwise decay; the second writes the error between the desired value and the value currently returned for $k_t$. The tested KDA layers use six 128-dimensional heads, value expansion 1, and a depthwise short convolution of width 4. We use the implementation described by \citet{kimi2025linear} through the FLA kernel library \citep{yang2024fla}. \Cref{tab:layouts} lists the compared layer layouts. For KDA, we use the 3:1 recurrent-to-attention pattern used by Kimi Linear. In the web composition, three of those six attention slots are replaced by \Simp{}. In the math and long-context composition, all six are replaced, yielding a model with no softmax-attention layer.

\begin{table}[t]
\centering
\caption{Mixer layouts for 24-layer models. Periodic slots are layers $\{3,7,11,15,19,23\}$. ``Global'' means that at least one token mode can access the full causal prefix.}
\label{tab:layouts}
\begin{tabularx}{\textwidth}{lcccX}
\toprule
Model & \Attn{} & \KDA{} & 2-simplicial & Description \\
\midrule
Standard attention & 24 & 0 & 0 & Full softmax attention in every block. \\
KDA hybrid & 6 & 18 & 0 & KDA in 18 layers; attention in every fourth layer. \\
\rowcolor{oursrow}
KDA + \Simp{} (web) & 3 & 18 & 3 & Alternating attention and \Simp{} in the six periodic slots. \\
\rowcolor{oursrow}
KDA + \Simp{} (no softmax) & 0 & 18 & 6 & KDA plus six global--local trilinear layers. \\
Attention + \Simp{} & 18 & 0 & 6 & Six standard-attention layers replaced by \Simp{}. \\
Attention + \ExactSimp{} & 18 & 0 & 6 & Exact $512\times32$ windowed pair softmax \citep{roy2025fast}. \\
\bottomrule
\end{tabularx}
\end{table}

\subsection{Parameter matching}

Mixer parameter counts differ, so we change only the SwiGLU intermediate width to keep non-embedding parameters within $\pm0.5\%$ of the 330.35M standard-attention model. Standard attention uses width 3456; the web KDA+\Simp{} composition uses 3392; KDA+\Simp{} without softmax, Attention+\Simp{}, and Attention+\ExactSimp{} use 3520; the KDA hybrid uses 3328. For reference, a standard attention mixer has about 3.15M projection parameters, a \Simp{} mixer about 2.36M plus small norms and gates, and a KDA mixer about 3.62M in the tested configuration.

\subsection{Custom Triton kernel}
\label{sec:kernel}
Our proposed recurrence relation is simple, but a direct PyTorch implementation either materializes token pairs or stores a prefix state for every position. Our Triton implementation avoids both. It uses sequence chunks of $B_C=32$ and prepares exclusive boundary states for $M$ and $a$. One forward program is launched for each query row and batch--head pair.

\paragraph{Forward.}
For query $i$, the kernel loads $q_i$ and the $w$ recent $(r_c,u_c)$ pairs. It reconstructs $M_i$ and $a_i$ from the chunk boundary plus a masked tail inside the current chunk. It then processes the feature rank in blocks of at most 128 columns. For each block it computes the log features of every composite query, updates one online maximum shared across the anchor and feature axes, rescales the running numerator and denominator, and accumulates in fp32. A common shift multiplies both $N_i$ and $Z_i$ by the same positive factor, so it cancels in their ratio. The shift is computed only from positions $c\le i$, preserving causality.

\paragraph{Backward.}
Only the inputs and one stabilizer scalar per query/head are saved. The first backward kernel is query-parallel: it recomputes the features and prefix state, produces $dq$, and atomically accumulates $dr$ and $du$ because each anchor participates in at most $w$ queries. The second kernel is chunk-parallel: it accumulates gradients for the key feature stream, values, and chunk-boundary states without atomics. Host-side automatic differentiation propagates the key-feature gradient through the exponential map and the L2 normalization. This two-sweep design avoids saving any tensor of shape $T\times m\times D_v$.

\section{Experimental Setup}
\label{sec:experiments}

\subsection{Training data and budgets}

We use two data sources. The web experiments train on FineWeb-Edu, an educational subset of FineWeb \citep{penedo2024fineweb}. The math experiments train on FineMath-4+, a high-quality mathematical-text subset introduced with SmolLM2 \citep{allal2025smollm2}. The available slices contain 1.15B and 3.16B Llama-2-tokenized tokens, respectively. All arms within a comparison use the same data order, random seed, bf16 precision, peak learning rate $3\times10^{-4}$, and warmup--stable--decay schedule \citep{wen2024wsd}.

We report both equal-token and analytic iso-FLOP comparisons. Let $F_a(T)$ be the estimated training FLOPs per token for architecture $a$ at context length $T$, including dense projections, SwiGLU, the output head, and a mixer-specific kernel term. Given reference token budget $N_{\mathrm{ref}}$, the iso-FLOP budget is
\begin{equation}
  N_a
  =N_{\mathrm{ref}}
   \frac{F_{\mathrm{ref}}(T)}{F_a(T)},
  \label{eq:isoflop-budget}
\end{equation}
rounded down to a whole 524,288-token training step. This follows the general principle that architecture comparisons should control training compute, not only parameter count or token count \citep{hoffmann2022chinchilla}. It remains an analytic accounting measure rather than a wall-clock or energy measurement. We study three settings:
\begin{enumerate}
  \item \textbf{Web, 2k context:} a 350M-token reference budget on FineWeb-Edu.
  \item \textbf{Math, 2k context:} a 3B-token budget on FineMath-4+.
  \item \textbf{Math, 16k context:} about 3B tokens under the iso-FLOP rule.
\end{enumerate}

\subsection{Evaluation}

We use the Language Model Evaluation Harness \citep{gao2023lmeval} and report raw accuracy, without normalization, on ARC-Challenge and ARC-Easy \citep{clark2018arc}, BoolQ \citep{clark2019boolq}, HellaSwag \citep{zellers2019hellaswag}, OpenBookQA \citep{mihaylov2018openbookqa}, PIQA \citep{bisk2020piqa}, and WinoGrande \citep{sakaguchi2021winogrande}. Mean accuracy is the unweighted arithmetic mean of these seven scores. We also report word-level perplexity on WikiText \citep{merity2016wikitext} and perplexity on LAMBADA \citep{paperno2016lambada}. These evaluations probe different behavior from held-out next-token loss, but none is a direct measure of the proposed retrieval mechanism.

\subsection{Baselines}

The standard model uses softmax attention in all 24 layers. The KDA hybrid uses 18 KDA layers and six standard-attention layers. The exact 2-simplicial baseline follows the tested $w_1=512,w_2=32$ windowed form of \citet{roy2025fast}, and a custom Flash-style kernel computes the exact joint pair softmax. The \Simp{}-inside-attention baseline uses 18 standard-attention layers and six proposed layers. The main no-softmax model uses 18 KDA and six proposed layers.

\section{Results}
\label{sec:results}

\subsection{Web pretraining at 2k context}

\Cref{tab:web-isoflop} gives the 350M-reference iso-FLOP comparison. The strongest overall loss and perplexity numbers come from the KDA hybrid.  In this setting, the KDA+\Simp~composition is close but does not win. However, six \Simp~layers improve validation loss over 24 standard-attention layers (3.6528 versus 3.6618) at nearly the same estimated compute. The exact windowed 2-simplicial baseline has the highest validation loss after its token budget is reduced to match compute. It retains slightly higher mean accuracy than standard attention (0.3323 versus 0.3309), but its perplexities are worse. In the iso-token setting, as shown in \cref{tab:web-isotoken}, validation losses are effectively unchanged at the displayed precision, while replacing three of the six attention slots with \Simp{} raises mean accuracy by 0.0080. The gains on ARC-Challenge and ARC-Easy are 0.006 and 0.008.

\begin{table}[t]
\centering
\caption{FineWeb-Edu at 2k context under an analytic iso-FLOP budget. Lower is better for loss and perplexity; higher is better for mean accuracy. The best value in each column is bold. Proposed architectures are shaded.}
\label{tab:web-isoflop}
\resizebox{\textwidth}{!}{%
\begin{tabular}{lrrrrr}
\toprule
Architecture & Tokens & Val. loss $\downarrow$ & Mean acc. $\uparrow$ & WikiText ppl. $\downarrow$ & LAMBADA ppl. $\downarrow$ \\
\midrule
\rowcolor{oursrow}
KDA + \Simp{} & 362M & 3.4816 & 0.3446 & 152.2 & 5210 \\
KDA hybrid (6 attention) & 369M & \best{3.4563} & \best{0.3458} & \best{141.9} & \best{3825} \\
\rowcolor{oursrow}
Attention + \Simp{} & 350M & 3.6528 & 0.3298 & 183.5 & 7969 \\
Attention + exact \ExactSimp{} & 306M & 3.8006 & 0.3323 & 231.0 & 14756 \\
Standard attention & 350M & 3.6618 & 0.3309 & 186.6 & 7954 \\
\bottomrule
\end{tabular}}
\end{table}

\begin{table}[t]
\centering
\caption{FineWeb-Edu equal-token comparison between the two KDA compositions}
\label{tab:web-isotoken}
\begin{tabular}{lrrrr}
\toprule
Architecture & Val. loss $\downarrow$ & Mean acc. $\uparrow$ & ARC-C $\uparrow$ & ARC-E $\uparrow$ \\
\midrule
KDA hybrid (6 attention) & \best{3.5071} & 0.3385 & 0.185 & 0.352 \\
\rowcolor{oursrow}
KDA + \Simp{} (3 attention, 3 trilinear) & 3.5072 & \best{0.3465} & \best{0.191} & \best{0.360} \\
\bottomrule
\end{tabular}
\end{table}

\subsection{Math pretraining at 2k context}

All math arms are within 5\% of the standard model's estimated FLOPs, so the same run serves as the equal-token and iso-FLOP comparison. \Cref{tab:math-3b} shows the results after 3B FineMath-4+ tokens. Both KDA-based models improve validation loss over standard attention. The KDA hybrid has the best loss and perplexities. The KDA+\Simp{} model has the highest mean accuracy, 0.3900, and the highest OpenBookQA accuracy, 0.188. 

\begin{table}[H]
\centering
\caption{FineMath-4+ after 3B tokens at 2k context. The KDA+\Simp{} model contains no softmax-attention layer.}
\label{tab:math-3b}
\resizebox{\textwidth}{!}{%
\begin{tabular}{lrrrrr}
\toprule
Architecture & Val. loss $\downarrow$ & Mean acc. $\uparrow$ & WikiText ppl. $\downarrow$ & LAMBADA ppl. $\downarrow$ & OpenBookQA $\uparrow$ \\
\midrule
KDA hybrid (6 attention) & \best{1.5940} & 0.3895 & \best{121.3} & \best{586.8} & 0.172 \\
\rowcolor{oursrow}
KDA + \Simp{} (no softmax) & 1.6215 & \best{0.3900} & 131.4 & 604.0 & \best{0.188} \\
Standard attention & 1.6355 & 0.3860 & 130.3 & 696.4 & 0.144 \\
\bottomrule
\end{tabular}}
\end{table}

\begin{table}[H]
\centering
\caption{FineMath-4+ at 16k context under an analytic iso-FLOP budget.}
\label{tab:long-aggregate}
\begin{tabular}{lrrrr}
\toprule
Architecture & Tokens & Val. loss $\downarrow$ & Mean acc. $\uparrow$ & LAMBADA ppl. $\downarrow$ \\
\midrule
KDA hybrid (6 attention) & 2.70B & \best{1.5477} & 0.3809 & 715.6 \\
\rowcolor{oursrow}
KDA + \Simp{} (no softmax) & 3.14B & 1.5547 & \best{0.3888} & \best{602.6} \\
\bottomrule
\end{tabular}
\end{table}

\subsection{Long context at 16k}

At 16k context, the six softmax layers raise Kimi's estimated model FLOPs by about 16\% relative to the trilinear layers. The iso-FLOP rule therefore assigns 2.70B tokens to the KDA hybrid and 3.14B to the no-softmax KDA+\Simp~model. This is the setting where the proposed architecture has its clearest compute-accounted advantage. Our model nevertheless improves mean accuracy from 0.3809 to 0.3888 and lowers LAMBADA perplexity from 715.6 to 602.6 (\cref{tab:long-aggregate}). It wins five of seven downstream tasks (\cref{tab:long-tasks}), including gains of 0.011 on ARC-Challenge and 0.020 on OpenBookQA.

\begin{table}[H]
\centering
\caption{Per-task raw accuracy at 16k context. The proposed model wins five of seven tasks.}
\label{tab:long-tasks}
\begin{tabular}{lrrr}
\toprule
Task & KDA hybrid & KDA + \Simp{} & Difference \\
\midrule
ARC-Challenge & 0.185 & \best{0.196} & +0.011 \\
ARC-Easy & \best{0.402} & 0.401 & $-0.001$ \\
BoolQ & 0.596 & \best{0.603} & +0.007 \\
HellaSwag & \best{0.274} & 0.271 & $-0.003$ \\
OpenBookQA & 0.146 & \best{0.166} & +0.020 \\
PIQA & 0.575 & \best{0.582} & +0.007 \\
WinoGrande & 0.489 & \best{0.502} & +0.013 \\
\midrule
Mean & 0.3809 & \best{0.3888} & +0.0079 \\
\bottomrule
\end{tabular}
\end{table}

\subsection{Kernel performance}

\Cref{tab:kernel-perf} reports initial kernel measurements on an RTX 6000 Ada. At batch 8, sequence length 2048, six heads, $m=128$, and $w=64$, one \Simp{} layer takes 37 ms for forward and backward, about 1.03 times the measured softmax-attention layer time in the same setup. The earlier $m=64,w=32$ kernel takes 29 ms. Thus the current implementation is already close to attention at 2k, but it does not yet beat the mature softmax kernel.

The full no-softmax model processes 28.4k tokens/s at 2k context and 20.2k tokens/s at 16k with micro-batching. The 16k KDA-softmax hybrid reaches 24.5k tokens/s on the same system (this wall-clock result is weaker than the analytic complexity result). Profiling indicates substantial room in the proposed backward pass, which does many more dot products per pair than the forward pass. We therefore treat the kernel as a functional first implementation rather than a final speed claim.

\begin{table}[H]
\label{tab:kernel-perf}
\begin{tabularx}{\textwidth}{lrrX}
\toprule
Measurement & Context & Result & Notes \\
\midrule
\Simp{} layer, $m=128,w=64$, B8/H6 & 2k & 37 ms F+B & $1.03\times$ the softmax-layer time. \\
Earlier \Simp{} layer, $m=64,w=32$ & 2k & 29 ms F+B & Lower-rank, shorter-window. \\
KDA + \Simp{} full model & 2k & 28.4k tok/s & 18 KDA + 6 \Simp{} layers. \\
KDA + \Simp{} full model & 16k & 20.2k tok/s & Micro-batch 1. \\
KDA hybrid full model & 16k & 24.5k tok/s & 6 Softmax-attention layers. \\
\bottomrule
\end{tabularx}
\end{table}

\section{Discussion}
Our early experiments show that our model can replace part of a KDA hybrid improving mean downstream accuracy, and can form a fully subquadratic no-softmax model with strong 3B-token and 16k results. The strongest long-context comparison improves mean accuracy on five of seven tasks under an analytic iso-FLOP budget. Unfortunately, with our current compute, we cannot estimate variance or attach confidence intervals - the experiments are preliminary and use a single seed. The models are currently all near 330M parameters. We would further want to study the scaling behavior which cuurrently is unknown. Our custom kernel's backward pass is not yet highly optimized. Finally, iso-FLOP budgets use an analytic operation count. Such counts are useful for controlled model comparisons but do not measure energy, latency, or total system cost.

\clearpage
\appendix

\section{Detailed Derivation}
\label{app:derivation}

This appendix expands the algebra behind \cref{sec:method}. For notational clarity, let
\begin{equation}
  y_{jc}=v_j\had u_c,
  \qquad
  \kappa_{ijc}=\exp(z_{ic}^\top\hat k_j).
\end{equation}
The exact output over the anchor window is
\begin{equation}
  o_i^{(w)}
  =\frac{\sum_{c\in\mathcal C_i}\sum_{j\le i}\kappa_{ijc}y_{jc}}
  {\sum_{c\in\mathcal C_i}\sum_{j\le i}\kappa_{ijc}}.
  \label{eq:exact-window-app}
\end{equation}
Using the feature approximation,
\begin{align}
  \sum_{c}\sum_j
  \kappa_{ijc}y_{jc}
  &\approx
  \sum_c\sum_j
  \bigl[\phi(z_{ic})^\top\phi(\hat k_j)\bigr]
  (v_j\had u_c)\\
  &=\sum_c
  \left(
    \phi(z_{ic})^\top
    \sum_j\phi(\hat k_j)v_j^\top
  \right)\had u_c\\
  &=\sum_c\bigl[\phi(z_{ic})^\top M_i\bigr]\had u_c.
\end{align}
The denominator follows by setting the value to one:
\begin{equation}
  \sum_c\sum_j\kappa_{ijc}
  \approx
  \sum_c\phi(z_{ic})^\top a_i.
\end{equation}
This derivation uses one feature map on the composite query and the same feature map on the first key stream. The other algebraic orientation in \cref{eq:two-views} is also valid, but computing both orientations doubles the main contraction cost. The current model uses only the orientation that keeps $j$ global and $c$ explicit.

\paragraph{Why the normalization is still joint.}
Expanding the proposed denominator gives
\begin{equation}
  Z_i
  =\sum_{c\in\mathcal C_i}\sum_{j\le i}
    \phi(z_{ic})^\top\phi(\hat k_j).
\end{equation}
Thus every approximate pair weight participates in the same scalar normalizer. There is no per-anchor denominator and no second normalization across anchors. This distinction matters because a nested construction would define a different operator.

\paragraph{Feature scaling.}
The $m^{-1/2}$ factor appears in both feature vectors, producing an overall $1/m$ in every approximate kernel. Because that constant multiplies both numerator and denominator, the implementation may fold it into the saved accumulators or omit it from the final ratio, provided the same convention is used in both paths.

\section{Random-Feature Bank and Stabilization}
\label{app:features}

\subsection{Orthogonal Gaussian marginals}

For each head and layer, the feature bank is sampled once and stored as a non-trainable buffer. We form blocks as follows:
\begin{enumerate}
  \item Draw $G\in\R^{D\times D}$ with iid standard Gaussian entries and compute $G=QR$.
  \item Correct the signs by multiplying each column of $Q$ by $\operatorname{sign}(R_{aa})$. This is required for a Haar-distributed orthogonal factor.
  \item Use rows of $Q^\top$ as orthogonal directions and multiply each row by an independent $\chi_D$ radius.
  \item Stack enough blocks to obtain $m$ rows.
\end{enumerate}
Each row has the marginal distribution $\mathcal N(0,I_D)$, so \cref{lem:kernel} remains valid. The rows are not independent, but orthogonality lowers variance in common random-feature settings \citep{yu2016orthogonal,choromanski2021performer}. In implementation tests, omitting the QR sign correction left the kernel error near $6\times10^{-2}$ from $m=1024$ through $m=262{,}144$. With the correction, the error fell with the expected $m^{-1/2}$ trend and reached about $3\times10^{-3}$ at $m=262{,}144$. These are diagnostic measurements of the feature sampler, not language-model results.

\subsection{Log-space evaluation}

Direct evaluation of \cref{eq:prf} can overflow. For one query/head, let
\begin{equation}
  \ell_{icp}=\omega_p^\top z_{ic}-\frac{\lVert z_{ic}\rVert_2^2}{2}
\end{equation}
be the log feature for anchor $c$ and feature row $p$. The kernel maintains
\begin{equation}
  L_i=\max_{c\in\mathcal C_i,\,p\le m}\ell_{icp}
\end{equation}
in an online pass across feature blocks and evaluates $\exp(\ell_{icp}-L_i)$. This multiplies every pair weight used at query $i$ by $\exp(-L_i)$, so both $N_i$ and $Z_i$ receive the same factor. It cancels exactly in the ratio before the zero-division guard is considered.

The shift must be shared by numerator and denominator and must depend only on the causal query row. A global maximum over the whole sequence would leak future information. The key features $\phi(\hat k_j)$ are evaluated without a sequence-global stabilizer; L2-normalized keys and fixed feature-row norms keep their log values bounded for the tested bank.

\section{Kernel Algorithms}
\label{app:kernels}

\begin{algorithm}[t]
\caption{Forward pass for one query row $i$ and one batch--head pair}
\label{alg:forward}
\begin{algorithmic}[1]
\Require $q_i$; anchor window $(r_c,u_c)$; key features $\chi_j=\phi(\hat k_j)$; values $v_j$; chunk boundary states $M_b,a_b$; feature bank $\Omega$
\State Reconstruct $M_i,a_i$ from the exclusive chunk boundary plus the masked intra-chunk tail $j\le i$
\State Load the valid anchor window $\mathcal C_i$ and form $z_{ic}=(\tau\hat q_i)\had\hat r_c$
\State Initialize running maximum $L\gets-\infty$, numerator $N\gets0$, denominator $Z\gets0$
\For{feature blocks $P\subseteq\{1,\ldots,m\}$}
  \State $\ell_{cP}\gets z_{ic}\Omega_P^\top-\lVert z_{ic}\rVert_2^2/2$
  \State $L'\gets\max(L,\max_{c,p\in P}\ell_{icp})$
  \State Rescale $N,Z$ by $\exp(L-L')$
  \State $\psi_{cP}\gets\exp(\ell_{cP}-L')$
  \State $N\gets N+\sum_c(\psi_{cP}^\top M_{i,P})\had u_c$
  \State $Z\gets Z+\sum_c\psi_{cP}^\top a_{i,P}$
  \State $L\gets L'$
\EndFor
\State Save scaled $N,Z,L$ and return $N/(Z+10^{-30})$
\end{algorithmic}
\end{algorithm}

\paragraph{Host preparation.}
The host computes unstabilized key features $\chi_j$ in fp32 with autocast disabled. It divides the sequence into 32-token chunks, computes each chunk's contribution to $M$ and $a$, and takes an exclusive cumulative sum to obtain boundary states. Sequences are right-padded to a multiple of 32; causal masking ensures that padded rows cannot influence real rows.

\paragraph{State reconstruction.}
A forward program loads the boundary state for its chunk and a 32-row key/value tile. Rows after $i$ are masked, and one tiled matrix multiplication adds the local tail. This repeats some work across rows in a chunk, but avoids storing $M_i$ for every $i$.

\paragraph{Backward query sweep.}
Given upstream gradients for $N/(Z+\varepsilon)$, the query-parallel sweep recomputes the same stabilized features. It differentiates the log feature
\begin{equation}
  \ell=\omega^\top z-\frac{\lVert z\rVert_2^2}{2},
  \qquad
  \frac{\partial\ell}{\partial z}=\omega-z,
\end{equation}
then propagates $dz$ through $z=(\tau\hat q_i)\had\hat r_c$. Gradients for anchor streams use atomic additions because their write contention is bounded by $w$.

\paragraph{Backward state sweep.}
A second program owns one sequence chunk. It loops over query rows in that chunk, recomputes the feature blocks, and accumulates gradients for $\chi_j$, $v_j$, and the exclusive boundary states. Each chunk writes its local tensors once, without atomics. The host differentiates the exclusive cumulative sum and the key feature map.

\paragraph{Dispatch.}
The Triton path requires CUDA, dot-compatible power-of-two head dimensions, and $w\min(m,128)\le8192$. The tested launch uses four warps and one pipeline stage; a second stage exceeds the available shared memory for the largest supported block on the RTX 6000 Ada. Other cases use the fp32 reference. This fallback is also used as the semantic oracle in tests.

\section{FLOPs}
\label{app:flops}

Our iso-FLOP rule is intended to compare training  under one consistent convention. It is not a profiler-derived hardware model. We count a multiply, add as two floating-point operations and estimate training work as three times the forward work, covering the forward pass and two backward-like passes.

For a model with width $d$, vocabulary $V$, and SwiGLU width $I$, the common dense terms per token include
\begin{equation}
  2P_{\mathrm{mixer}}+2(3dI)+2dV,
\end{equation}
where $P_{\mathrm{mixer}}$ is the mixer's projection parameter count. We then add a mixer-specific sequence term per layer:
\begin{align}
  C_{\Attn}(T)
  &\approx H\,(2T d_h),\\
  C_{\KDA}
  &\approx H\,(30d_h^2),\\
  C_{\Simp}
  &\approx Hm\bigl[(w+1)(D+D_v)+B_CD_v+w\bigr],\\
  C_{\ExactSimp}
  &\approx H\,\operatorname{pairs}(w_1,w_2)\,2(D+D_v).
\end{align}
The attention expression uses an average causal context of $T/2$, which yields the displayed factor after counting score and value contractions. The KDA constant summarizes the tested state-update implementation. The \Simp{} expression includes anchor feature maps, contractions, and per-row reconstruction of a $B_C$-token local state tail. The exact-window expression counts valid pair evaluations near the interior of the sequence. All arms use the same counting convention before applying \cref{eq:isoflop-budget}.

\begin{table}[t]
\centering
\caption{SwiGLU widths used for parameter matching. Non-embedding parameter counts remain within $\pm0.5\%$ of the 330.35M standard-attention reference.}
\label{tab:widths}
\begin{tabular}{lr}
\toprule
Architecture & SwiGLU width $I$ \\
\midrule
Standard attention & 3456 \\
KDA hybrid & 3328 \\
KDA + \Simp{} (web) & 3392 \\
KDA + \Simp{} (no softmax) & 3520 \\
Attention + \Simp{} & 3520 \\
Attention + exact \ExactSimp{} & 3520 \\
\bottomrule
\end{tabular}
\end{table}

\section{Partial Ablations}
\label{app:evolution}

The final operator emerged from several implementations:
\begin{enumerate}
  \item \textbf{Two global orientations.} Kernelizing both views in \cref{eq:two-views} and averaging them retains full prefixes on both orientations but costs $O(T^2m)$ and measured about 315 ms per layer in the early setup.
  \item \textbf{One global orientation.} Keeping only the orientation used in this paper halves that cost, to about 160 ms, but remains quadratic because the anchor mode is global.
  \item \textbf{Windowed anchor mode.} Restricting $c$ produces the final $O(Tmw)$ form and reduced the early kernel to 29 ms at $m=64,w=32$.
  \item \textbf{Sharper final configuration.} Increasing $m$ from 64 to 128 and $w$ from 32 to 64 recovered part of the early quality gap to the softmax-carrying KDA hybrid at 500M tokens. By about 1B tokens, the measured gap attributed to this sharpening was small in the available sweep. This observation is suggestive, not a complete ablation.
  \item \textbf{Chunk size.} Reducing $B_C$ from 64 to 32 lowered the repeated state-reconstruction cost. Larger chunks and deeper software pipelining exceeded shared-memory limits on the RTX 6000 Ada for the tested feature blocks.
\end{enumerate}

These iterations motivate the final design, but they were not all run under the same full training budget. They should not be read as controlled model-quality comparisons.

\printbibliography

\end{document}